\newtheorem{theorem}{Theorem}
\newtheorem{definition}{Definition}
\begin{document}
%
% paper title
% Titles are generally capitalized except for words such as a, an, and, as,
% at, but, by, for, in, nor, of, on, or, the, to and up, which are usually
% not capitalized unless they are the first or last word of the title.
% Linebreaks \\ can be used within to get better formatting as desired.
% Do not put math or special symbols in the title.
\title{A Tool for Neural Network Global Robustness Certification and Training}
%
%
% author names and IEEE memberships
% note positions of commas and nonbreaking spaces ( ~ ) LaTeX will not break
% a structure at a ~ so this keeps an author's name from being broken across
% two lines.
% use \thanks{} to gain access to the first footnote area
% a separate \thanks must be used for each paragraph as LaTeX2e's \thanks
% was not built to handle multiple paragraphs
%

\author{
\IEEEauthorblockN{Zhilu Wang\IEEEauthorrefmark{1},
Yixuan Wang\IEEEauthorrefmark{1},
Feisi Fu\IEEEauthorrefmark{2},
Ruochen Jiao\IEEEauthorrefmark{1},
Chao Huang\IEEEauthorrefmark{3},
Wenchao Li\IEEEauthorrefmark{2},
Qi Zhu\IEEEauthorrefmark{1}}\\
\IEEEauthorblockA{\IEEEauthorrefmark{1}Department of Electrical and Computer Engineering, Northwestern University, Evanston, IL, USA\\
Emails: \{zhilu.wang, yixuanwang2024, ruochenjiao2024\}@u.northwestern.edu, qzhu@northwestern.edu}\\
\IEEEauthorblockA{\IEEEauthorrefmark{2}Department of Electrical and Computer Engineering, Boston University, Boston, MA, USA\\
Emails: \{fufeisi, wenchao\}@bu.edu}\\
\IEEEauthorblockA{\IEEEauthorrefmark{3}Department of Computer Science, University of Liverpool, Liverpool, UK\\
Email: chao.huang2@liverpool.ac.uk
}}
\maketitle

% As a general rule, do not put math, special symbols or citations
% in the abstract or keywords.
\begin{abstract}
With the increment of interest in leveraging machine learning technology in safety-critical systems, the robustness of neural networks under external disturbance receives more and more concerns. Global robustness is a robustness property defined on the entire input domain. And a certified globally robust network can ensure its robustness on any possible network input. However, the state-of-the-art global robustness certification algorithm can only certify networks with at most several thousand neurons. In this paper, we  propose the GPU-supported global robustness certification framework GROCET, which is more efficient than the previous optimization-based certification approach. Moreover, GROCET provides differentiable global robustness, which is leveraged in the training of globally robust neural networks.
\end{abstract}

% Note that keywords are not normally used for peerreview papers.
% \begin{IEEEkeywords}
% IEEE, IEEEtran, journal, \LaTeX, paper, template.
% \end{IEEEkeywords}

% For peer review papers, you can put extra information on the cover
% page as needed:
% \ifCLASSOPTIONpeerreview
% \begin{center} \bfseries EDICS Category: 3-BBND \end{center}
% \fi
%
% For peerreview papers, this IEEEtran command inserts a page break and
% creates the second title. It will be ignored for other modes.
\IEEEpeerreviewmaketitle

\section{Introduction}
While most works are focusing on evaluating and improving the local robustness of neural networks, the high runtime overhead introduced by local robustness certification limits its applicable scenarios. Global robustness can perfectly avoid this issue as it can be certified offline and applied to the entire input domain. \cite{zhilu_2022_date} is the first efficient global robustness certification algorithm that can handle neural networks with thousands of neurons.
This work is an extension of the neural network global robustness certification algorithm~\cite{zhilu_2022_date}. In this work, we still leverage the interleaving twin-network encoding (ITNE) structure from~\cite{zhilu_2022_date}. However, instead of formulating the global robustness certification as an optimization problem, we propose an upgraded version of the neural network global robustness certification algorithm by symbolic prorogation and branch-and-bound, inspired by $\alpha, \beta$-CROWN~\cite{beta_crown}. Substituting the optimization problem by symbolic propagation allows us to enable parallel computing on more powerful platforms (e.g., GPU) and makes it easy and efficient to extract gradient information to facilitate the training process to obtain globally robust neural networks. We denote this new certification algorithm as GROCET. And the optimization-based algorithm in~\cite{zhilu_2022_date} is denoted as MILP-ITNE.

In this manuscript, we present the GROCET algorithm and prove its soundness in the over-approximation of neural network global robustness. And we propose a globally robust network training technique that treats the certified global robustness as a regularization term.

\section{System Model}

We consider a neural network $\mathcal{F}$ that maps the input $x_{in}$ into the output $x_{out}$. As the robustness property is defined for each network output, without loss of generality, the network output is assumed to be 1-dimensional, i.e., $x_{out}\in \mathbb{R}$. The network $\mathcal{F}$ is considered in a graph view, where each vertex in the graph represents a layer $f$ and the edges in the graph represent the network hidden states $x_i$. In this work, we support neural networks containing the following types of layers.

\subsection{Single-input-single-output layers}

\paragraph{Linear layer}
\[
x_i = f(x_j) = W x_j + b
\]
The linear layer can be either a fully-connected layer or a convolution layer.

\paragraph{ReLU layer}
\[
x_i = f(x_j) = \max(x_j, 0)
\]

\paragraph{Maxpooling layer}
\[
x_i = f(x_j) = \text{maxpooling}(x_j)
\]

\subsection{Double-input-single-output layers}

\paragraph{Addition layer}
\[
x_i=f(x_j, x_k) = x_j + x_k
\]

\paragraph{Subtraction layer}
\[
x_i=f(x_j, x_k) = x_j - x_k
\]

\begin{definition}[Global Robustness]
    The neural network $\mathcal{F}$ is $(\delta,\varepsilon)$-globally robust in the entire input domain $X$ if for any input $\forall x \in X$ and any bounded perturbation $\forall x'\in \{x'\in X \mid \|x'-x\|\leq \delta \}$, the network output variation satisfies
    \[
    |\mathcal{F}(x') - \mathcal{F}(x)| \leq \varepsilon.
    \]
\end{definition}

\section{GROCET Certification Algorithm}

Leveraging the interleaving twin-network encoding (ITNE), we denote the distance between each hidden state $x_i$ and the corresponding perturbed hidden state $x_i'$ as $\Delta x_i$. Inspired by the $\alpha,\beta$-CROWN, we propose an algorithm to over-approximate the neural network output variation bound under disturbance, which consists of the symbolic propagation and branch-and-bound.

\subsection{Symbolic Propagation} \label{sec:symb_prop}

Denote $\mathcal{X}=\{\Delta x_{in}, \Delta x_{out}, \Delta x_i, \forall i\}$ as the set of neural network input perturbation, output variation and all hidden state distances. T

\begin{theorem}
    The network output variation can have a linear lower and upper bound w.r.t. the network input perturbation $\Delta x_{in}$:
    \begin{equation} \label{equ:final_lin_bound}
        \begin{aligned}
    \mathcal{F}(x_{in} + \Delta x_{in}) - \mathcal{F}(x_{in}) &\geq A \Delta x_{in} + b,\\
    \mathcal{F}(x_{in} + \Delta x_{in}) - \mathcal{F}(x_{in}) &\leq C \Delta x_{in} + d.
    \end{aligned}
    \end{equation}
\end{theorem}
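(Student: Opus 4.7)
The plan is to prove the theorem by structural induction on the computation graph of $\mathcal{F}$, propagating linear lower and upper bounds on every $\Delta x_i \in \mathcal{X}$ with respect to the input perturbation $\Delta x_{in}$. Concretely, I would maintain throughout the propagation an invariant of the form $\underline{A}_i \Delta x_{in} + \underline{b}_i \leq \Delta x_i \leq \overline{A}_i \Delta x_{in} + \overline{b}_i$ for every hidden-state distance in the ITNE graph, and show that each supported layer type preserves this invariant. At the output vertex this specializes to \eqref{equ:final_lin_bound}.

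First I would handle the base case and the linear/affine layers. For the base case $\Delta x_{in}$ itself the invariant holds trivially with identity matrices and zero offsets. For a linear layer $x_i = W x_j + b$ one has $\Delta x_i = W \Delta x_j$, so substituting the inductive bounds on $\Delta x_j$ and splitting $W$ into its positive and negative parts $W^{+}, W^{-}$ yields linear bounds on $\Delta x_i$ in closed form. The addition and subtraction layers are even easier: linearity of the difference operator directly combines the two parent bounds. These steps are routine but worth recording since they justify the backward symbolic substitution that the algorithm performs.

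The main obstacle is the nonlinear layers, in particular ReLU, because in the ITNE setting the quantity to relax is $\Delta x_i = \mathrm{ReLU}(x_j') - \mathrm{ReLU}(x_j)$, a bivariate function of the pre-activation $x_j$ and its perturbed counterpart $x_j' = x_j + \Delta x_j$, rather than a univariate ReLU. To obtain valid linear envelopes one needs concrete interval bounds on both $x_j$ and $x_j'$, which are themselves produced by an auxiliary forward/backward pass at each layer. Given these bounds, a case analysis on the four sign combinations (both pre-activations nonnegative, both nonpositive, or mixed) isolates the nontrivial case where $\mathrm{ReLU}(x_j') - \mathrm{ReLU}(x_j)$ is piecewise linear with at most two breakpoints; in each such region one picks a sound affine upper envelope of the form $\alpha \Delta x_j + \beta$ and a sound affine lower envelope, parameterized by free slopes analogous to CROWN's $\alpha$-parameters. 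Soundness follows by checking that the chosen envelopes dominate (resp.\ are dominated by) $\mathrm{ReLU}(x_j') - \mathrm{ReLU}(x_j)$ at the vertices of the two-dimensional box $[\underline{x}_j, \overline{x}_j] \times [\underline{x}_j', \overline{x}_j']$, which suffices because the relaxed function is piecewise affine. Maxpooling is treated analogously by writing it as a composition of ReLUs and additions, or by a direct convex relaxation of the pairwise max difference.

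Finally, composing these per-layer relaxations along the backward pass produces linear coefficients $A, C$ and offsets $b, d$ expressed purely in terms of $\Delta x_{in}$, establishing \eqref{equ:final_lin_bound}. The soundness proof then amounts to observing that each substitution step replaces a term by a valid linear under- or over-estimator, and that the sign-aware substitution (using $W^{+}, W^{-}$ decompositions at every subsequent linear step) preserves the inequality direction. The cleanest way to present this is to state a single lemma asserting that the invariant is preserved by every layer type, prove it layer-by-layer, and then obtain the theorem as the instantiation at $\Delta x_{out}$.
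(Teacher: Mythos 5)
Your induction is sound in outline and, up to direction, parallels the paper's: the paper runs the induction \emph{backward}, maintaining linear bounds on the output variation as a combination over a frontier set $\mathcal{X}'$ of hidden-state distances (base case $\mathcal{X}'=\{\Delta x_{out}\}$) and eliminating one $\Delta x_i$ per step via the sign-split substitution $\left(A_i\right)^{+}L_i+\left(A_i\right)^{-}U_i$, whereas you maintain forward per-node invariants $\underline{A}_i\Delta x_{in}+\underline{b}_i\le\Delta x_i\le\overline{A}_i\Delta x_{in}+\overline{b}_i$. Either direction establishes the existence claim in \eqref{equ:final_lin_bound}, and your treatment of the affine layers is fine. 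The genuine gap is in your ReLU step. You assert that relaxing $\Delta x_i=\mathrm{relu}(x_j+\Delta x_j)-\mathrm{relu}(x_j)$ requires concrete interval bounds on the pre-activations $x_j$ and $x_j'$, obtained by an auxiliary pass. In the \emph{global} robustness setting this is precisely what is unavailable: the certificate must hold for every $x_{in}$ in the entire input domain, so $x_j$ ranges over an a priori unbounded set and no auxiliary pass yields useful bounds on it. The paper's relaxation is built to avoid this: it takes the union of the $(\Delta x_j,\Delta x_i)$ relation over \emph{all} $x_j\in\mathbb{R}$ (the region $\min(\Delta x_j,0)\le\Delta x_i\le\max(\Delta x_j,0)$ in Fig.~\ref{fig:relu_distance}) and then needs only an interval bound on the \emph{distance} $\Delta x_j\in[l,u]$, which is obtainable because $\|\Delta x_{in}\|_\infty\le\delta$ is bounded. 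Your four-sign case analysis collapses to that same region once $x_j$ is unbounded, so the repair is to drop the dependence on pre-activation bounds entirely; as written, your argument either fails on an unbounded domain or silently imports a local-robustness assumption.

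A second, smaller flaw: your soundness criterion of checking the affine envelopes only at the four vertices of the box $[\underline{x}_j,\overline{x}_j]\times[\underline{x}_j',\overline{x}_j']$ is not sufficient, because $f(x,x')=\mathrm{relu}(x')-\mathrm{relu}(x)$ is piecewise affine but neither convex nor concave. Concretely, on $[-1,1]^2$ the affine function $g(x,x')=(x'-x)/2$ matches $f$ at all four corners, yet $g(0,1)=1/2<1=f(0,1)$, so it is not an upper envelope. Domination must be checked at the vertices of the polyhedral pieces of $f$ (including breakpoints on the edges of the box), or argued piece by piece.
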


\begin{proof}
    This can be proved by induction. Assume that the network output variation can be bounded by a linear combination of a subset of state distances $\mathcal{X}'\subseteq \mathcal{X}$:
    \begin{equation} \label{equ:mid_lin_bound}
        \begin{aligned}
            \mathcal{F}(x_{in} + \Delta x_{in}) - \mathcal{F}(x_{in}) &\geq \sum_{x_i\in \mathcal{X}'} A^{\mathcal{X}'}_i \Delta x_i + b^{\mathcal{X}'},\\
            \mathcal{F}(x_{in} + \Delta x_{in}) - \mathcal{F}(x_{in}) &\leq \sum_{x_i\in \mathcal{X}'} C^{\mathcal{X}'}_i \Delta x_i + d^{\mathcal{X}'}.
        \end{aligned}
    \end{equation}
    
    This assumption is true when $\mathcal{X}'=\{\Delta x_{out}\}$, where $A^{\mathcal{X}'}_{out} = C^{\mathcal{X}'}_{out} = 1$ and $b^{\mathcal{X}'} = d^{\mathcal{X}'} = 0$. 

    The goal is to substitute all $\Delta x_i\in \mathcal{X} \setminus \{x_{in}\}$ by $x_{in}$. 

    Starting from the output layer, the linear bounds can propagate backward to the network input. Each layer needs to be bounded by a lower and upper bound. For instance, for layer $x_i=f(x_j, x_k)$ ($x_k$ is optional), the layer output variation needs to be bounded by its lower bound $L_i(\Delta x_j, \Delta x_k)$ and upper bound $U_i(\Delta x_j, \Delta x_k)$:
    \begin{equation} \label{equ:layer_lin_bound}
        \begin{aligned}
     \Delta x_i \geq L_i(\Delta x_j, \Delta x_k) &= \Psi_j \Delta x_j + \Psi_k \Delta x_k + \lambda\\
    \Delta x_i \leq U_i(\Delta x_j, \Delta x_k) &= \Omega_j \Delta x_j + \Omega_k \Delta x_k + \mu
    \end{aligned}
    \end{equation}

    In this case, for any $\forall \mathcal{X}'$ that $\Delta x_i \in \mathcal{X}'$, we can have:
    \begin{equation}
    A^{\mathcal{X}'}_i \Delta x_i \geq \left(A^{\mathcal{X}'}_i\right)^+ L_i(\Delta x_j, \Delta x_k) + \left(A^{\mathcal{X}'}_i\right)^- U_i(\Delta x_j, \Delta x_k),
    \end{equation}
    where $\left(A^{\mathcal{X}'}_i\right)^+ = \max(A^{\mathcal{X}'}_i, 0)$ and $\left(A^{\mathcal{X}'}_i\right)^- = \min(A^{\mathcal{X}'}_i, 0)$. Similarly, we have the upper bound of $C^{\mathcal{X}'}_i \Delta x_i$:
    \begin{equation}
    C^{\mathcal{X}'}_i \Delta x_i \leq \left(C^{\mathcal{X}'}_i\right)^+ U_i(\Delta x_j, \Delta x_k) + \left(C^{\mathcal{X}'}_i\right)^- L_i(\Delta x_j, \Delta x_k).
    \end{equation}

    Therefore, given the linear bounds w.r.t. a set $\mathcal{X}'$ as in Eq.~\eqref{equ:mid_lin_bound}, and the linear bounds of $\Delta x_i$ as in Eq.~\eqref{equ:layer_lin_bound}, 
    the linear bounds of set $\mathcal{X}''=\{\Delta x_j, \Delta x_k\} \bigcup \mathcal{X}' \setminus \{\Delta x_i\}$ can be derived. Specifically, we have:
    \begin{equation}\label{equ:matrixOps}
    \begin{aligned}
        A^{\mathcal{X}''}_j &= A^{\mathcal{X}'}_j + \left(A^{\mathcal{X}'}_i\right)^+ \Psi_j + \left(A^{\mathcal{X}'}_i\right)^- \Omega_j,\\
        A^{\mathcal{X}''}_k &= A^{\mathcal{X}'}_k + \left(A^{\mathcal{X}'}_i\right)^+ \Psi_k + \left(A^{\mathcal{X}'}_i\right)^- \Omega_k,\\
        A^{\mathcal{X}''}_h &= A^{\mathcal{X}'}_h, \quad \forall h \neq i, j, k\\
        b^{\mathcal{X}''} &= b^{\mathcal{X}'} + \left(A^{\mathcal{X}'}_i\right)^+ \lambda + \left(A^{\mathcal{X}'}_i\right)^- \mu,\\
        C^{\mathcal{X}''}_j &= C^{\mathcal{X}'}_j + \left(C^{\mathcal{X}'}_i\right)^+ \Omega_j + \left(C^{\mathcal{X}'}_i\right)^- \Psi_j,\\
        C^{\mathcal{X}''}_k &= C^{\mathcal{X}'}_k + \left(C^{\mathcal{X}'}_i\right)^+ \Omega_k + \left(C^{\mathcal{X}'}_i\right)^- \Psi_k,\\
        C^{\mathcal{X}''}_h &= C^{\mathcal{X}'}_h, \quad \forall h \neq i, j, k\\
        d^{\mathcal{X}''} &= d^{\mathcal{X}'} + \left(C^{\mathcal{X}'}_i\right)^+ \mu + \left(C^{\mathcal{X}'}_i\right)^- \lambda,
    \end{aligned}
    \end{equation}

    After propagating backward through layer $x_i=f(x_j, x_k)$, the linear bounds w.r.t. $\mathcal{X}'$ will be converted to the linear bounds w.r.t. $\mathcal{X}''$ which does not contains layer output distance $\Delta x_i$ anymore. Therefore, after propagating backward through all layers, we can derive the linear bounds only related to the network input perturbation $\Delta x_{in}$, as in Eq.~\eqref{equ:final_lin_bound}.
\end{proof}

\subsection{Layer bounds in Eq.~\eqref{equ:layer_lin_bound}}

To derive the linear bounds in Eq.~\eqref{equ:final_lin_bound}, the linear bounds for each layer output w.r.t. its inputs, i.e., the bounds in Eq.~\eqref{equ:layer_lin_bound}, are needed.

\subsubsection{Linear layers}

For linear layer $x_i = f(x_j) = W x_j + b$, we have $\Delta x_i = W\Delta x_j$, which is already linear. Therefore, Eq.~\eqref{equ:layer_lin_bound} for linear layer can be expressed as:
\[
\psi_j=\omega_j=W,\quad \lambda=\mu=0.
\]

Similar expressions can also be found for addition layers and subtraction layers, since they are also linear. For addition layers, we have 
\[
\psi_j=\omega_j=\psi_k=\omega_k=1,\quad \lambda=\mu=0.
\]
And for subtraction layers, we have 
\[
\psi_j=\omega_j=1\quad \psi_k=\omega_k=-1,\quad \lambda=\mu=0.
\]

\begin{figure*}[htb]
  \centering
  \includegraphics[width=0.8\textwidth]{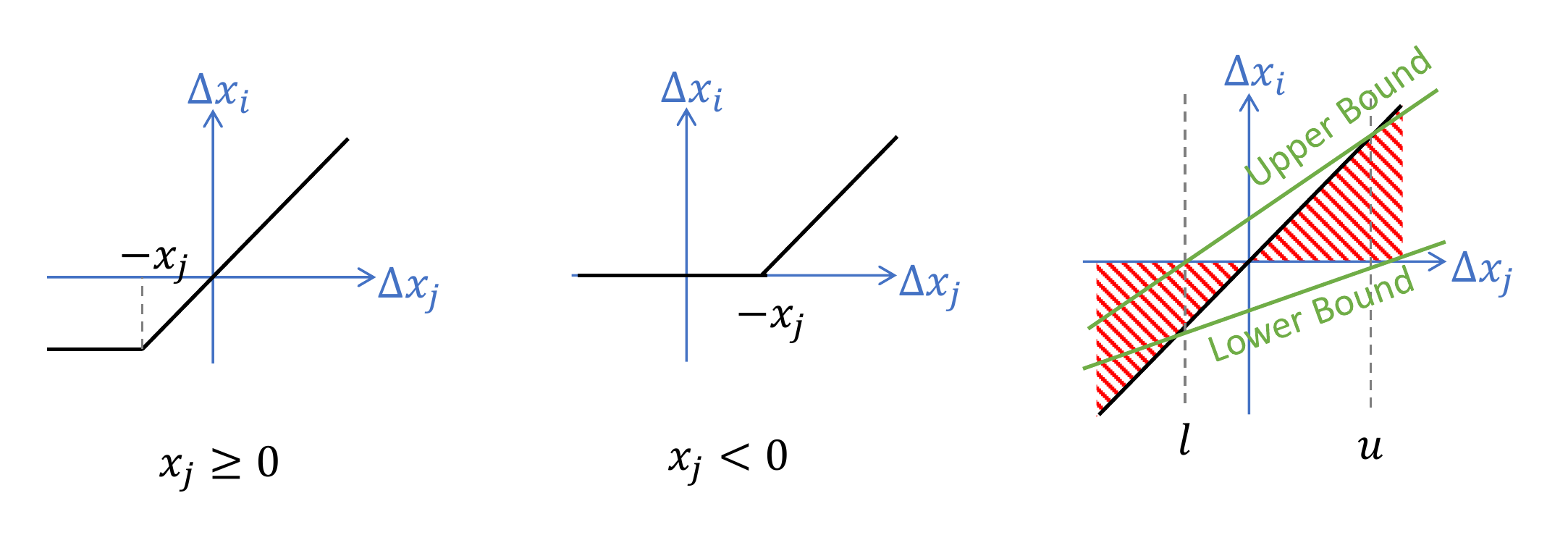}
  \caption{The relation between $\Delta x_i$ and $\Delta x_j$ for the ReLU distance $\Delta x_i=relu(x_j + \Delta x_j) - relu(x_j)$, which is related to the value of relu input $x_j$. This figure was originally presented in~\cite{zhilu_2022_date}.}
  \label{fig:relu_distance}
\end{figure*}

\subsubsection{ReLU layer}
For the ReLU layer $x_i = f(x_j) = \max(x_j, 0)$, $\Delta x_i = \max(x_j + \Delta x_j, 0) - \max(x_j, 0)$, which is not linear. Also, the value of $x_j$ will impact the relation between $\Delta x_i$ and $\Delta x_j$. In \cite{zhilu_2022_date}, the relation between $\Delta x_i$ and $\Delta x_j$ for a given $x_j$ are demonstrated, as shown in the first two diagrams in Fig.~\ref{fig:relu_distance}. If we consider that $x_j\in \mathbb{R}$, the relation between $\Delta x_i$ and $\Delta x_j$ will fall in the red shadow area as shown in the third diagram of Fig.~\ref{fig:relu_distance}.

Therefore, given an interval bound of $\Delta x_j$, written as $\Delta x_j \in [l, u]$,\footnote{We will discuss how we can derive this interval bound later. For now, let's assume this interval bound is known.} we can have a linear lower bound and a linear upper bound of $\Delta x_i$ w.r.t. $\Delta x_j$, as shown in the third diagram of Fig.~\ref{fig:relu_distance}. Formally, we can have the linear bounds in Eq.\eqref{equ:layer_lin_bound} for the ReLU layer by:
\begin{equation}
    \begin{aligned}
        \psi_j &= \frac{\max(u, 0)}{\max(u, 0) - \min(l,0)},\\
        \lambda &= -\frac{\max(u, 0)\min(l,0)}{\max(u, 0) - \min(l,0)},\\
        \omega_j &= -\frac{\min(l,0)}{\max(u, 0) - \min(l,0)}, \\
        \mu &= \frac{\max(u, 0)\min(l,0)}{\max(u, 0) - \min(l,0)}.
    \end{aligned}
\end{equation}

\subsubsection{Maxpooling Layer}

Another non-linear relation is the maxpooling layer. Consider one output element $s\in x_i$, we have 
\begin{equation} \label{equ:maxpool}
    s=\max\{t\mid t\in pool_s\},
\end{equation}
where $pool_s\subseteq x_j$ is the set of some elements of layer input. In that case, 
\[\Delta s=\max\{t + \Delta t\mid t\in pool_i\} - \max\{t\mid t\in pool_i\},\]
which is non-linear to $\Delta t\in pool_i$ and also related to the value of $t$. 

To enable linear symbolic propagation, the maxpooling layer can be converted into a combination of ReLU layers, addition layers, and subtraction layers. 

Consider the case that $pool_s = \{t, r\}$, where Eq.~\eqref{equ:maxpool} becomes a two-element maximization, we can have:
\[s=\max(t, r) = r + \max(t-r, 0) = r + relu(t-r).\]
In that case, we convert a two-element maxpooling into a subtraction, a ReLU and an addition. For maxpooling with more than two elements, we can also convert it into a nested two-element maxpooling. For instance $\max(t_1, t_2, t_3, t_4)=\max(\max(t_1, t_2),\max(t_3, t_4))$.

Through this conversion, a max-pooling layer can be converted into a series of linear layers and ReLU layers, where the functions to derive the layer bounds in Eq.~\eqref{equ:layer_lin_bound} are already presented.

\subsection{Neural network output violation bound}

When the layer bounds in Eq.~\eqref{equ:layer_lin_bound} for all layers are known, leveraging the backward propagation of the linear bounds, the linear bounds w.r.t. input perturbation in Eq.~\eqref{equ:final_lin_bound} can be derived. In this work, we consider the input perturbation is bounded by the L-$\infty$ norm: $\|\Delta x_{in}\|_\infty\leq \delta$. Combining with Eq.~\eqref{equ:final_lin_bound}, we will have
\begin{equation} \label{equ:final_interval_bound}
    \begin{aligned}
    -\|A\|_1 \delta + b \leq \mathcal{F}(x_{in} + \Delta x_{in}) - &\mathcal{F}(x_{in}) \leq \|C\|_1 \delta + d,\\
    & \forall x_{in},\ \forall \|\Delta x_{in}\|_\infty\leq \delta,
    \end{aligned}
\end{equation}
where $\|\cdot\|_1$ is the vector L-1 norm. 

\begin{theorem}
    The neural network is $(\delta, \varepsilon)$-globally robust if 
    \begin{equation}\label{equ:globrobust_condition}
        \max(|b -\|A\|_1 \delta|, |d + \|C\|_1 \delta|) \leq \varepsilon
    \end{equation}
\end{theorem}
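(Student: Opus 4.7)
The plan is to derive $|\mathcal{F}(x_{in}+\Delta x_{in})-\mathcal{F}(x_{in})|\leq\varepsilon$ directly from Eq.~\eqref{equ:final_interval_bound}, which is already a universal statement over every $x_{in}$ and every admissible perturbation with $\|\Delta x_{in}\|_\infty\leq\delta$. Because the quantifiers in that inequality match the quantifiers in the definition of $(\delta,\varepsilon)$-global robustness, the entire task reduces to verifying that the numerical interval $[b-\|A\|_1\delta,\,d+\|C\|_1\delta]$ lies inside $[-\varepsilon,\varepsilon]$ whenever the hypothesis \eqref{equ:globrobust_condition} holds.

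To carry this out, I would first split containment in $[-\varepsilon,\varepsilon]$ into the two scalar conditions $b-\|A\|_1\delta\geq-\varepsilon$ and $d+\|C\|_1\delta\leq\varepsilon$. Next, I would expand the hypothesis $\max(|b-\|A\|_1\delta|,\,|d+\|C\|_1\delta|)\leq\varepsilon$ into the two absolute-value inequalities $|b-\|A\|_1\delta|\leq\varepsilon$ and $|d+\|C\|_1\delta|\leq\varepsilon$. Reading the lower side of the first absolute-value inequality supplies $b-\|A\|_1\delta\geq-\varepsilon$, and reading the upper side of the second supplies $d+\|C\|_1\delta\leq\varepsilon$. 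Substituting these two scalar bounds into Eq.~\eqref{equ:final_interval_bound} yields $-\varepsilon \leq \mathcal{F}(x_{in}+\Delta x_{in})-\mathcal{F}(x_{in})\leq \varepsilon$ uniformly, which is exactly the global robustness condition.

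There is no real obstacle here; the theorem is a short arithmetic consequence of the already-established interval bound. The only subtlety worth flagging is that only one direction of each absolute value is actually consumed by the argument, so a strictly sharper sufficient condition would be $\max(\|A\|_1\delta - b,\; d+\|C\|_1\delta)\leq\varepsilon$. The symmetric absolute-value form in \eqref{equ:globrobust_condition} is presumably preferred because it produces a single nonnegative scalar that varies smoothly with the network parameters, matching what the paper needs later when the certified bound is used as a differentiable regularizer during robust training on a GPU.
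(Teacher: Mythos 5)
Your proof is correct; the paper itself omits the proof of this theorem, and your argument --- extracting $b-\|A\|_1\delta\ge-\varepsilon$ and $d+\|C\|_1\delta\le\varepsilon$ from the hypothesis and chaining them with the universally quantified interval bound in Eq.~\eqref{equ:final_interval_bound} --- is the evident intended one. One minor correction to your closing remark: since $\Delta x_{in}=0$ is an admissible perturbation, any sound bounds must satisfy $b-\|A\|_1\delta\le 0\le d+\|C\|_1\delta$, so the condition you call ``strictly sharper'' in fact coincides with the stated one rather than improving on it.
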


\begin{proof}
    Omitted.
\end{proof}

Note that Eq.~\eqref{equ:globrobust_condition} is just a sufficient condition as there is over-approximation during the symbolic propagation, which, specifically, comes from the linear bounds of ReLU Layers.

\subsection{Input interval bounds of ReLU layers}

After converting maxpooling layers into a series of ReLU layers and linear layers, the only unknown parameters we need during the symbolic propagation are the interval bounds of the input of each ReLU layer. Previous, we assume such interval bounds are known. The approach to derive them is straightforward: Starting from the first ReLU layer, treat its input $x_j$ as the output of a sub-network, and evaluate its interval bound on this sub-network according to Eq.~\eqref{equ:final_lin_bound} and Eq.~\eqref{equ:final_interval_bound}. Then, evaluating the interval bounds of each subsequent ReLU layer, when the interval bounds are derived for all ReLU layers in the sub-network of which the ReLU input is the output.

\subsection{Interval bound refinement by branch-and-bound}

Once the interval bounds of all ReLU inputs are derived, the linear bounds of all layers as in Eq.~\eqref{equ:layer_lin_bound} can be calculated. Therefore, the global robustness of the neural network can be certified by using the symbolic propagation of the linear bounds, to derive the bounds w.r.t. network inputs, as in Eq.~\eqref{equ:final_lin_bound}. This symbolic propagation can be eventually written as a series of matrix operations as in Eq.~\eqref{equ:matrixOps}. These matrix operations can be deployed on GPUs to gain speedup just like the general training and inference of neural networks.

However, during the symbolic propagation of linear bounds, the linearization of the bounds of ReLU layers can cause a pessimistic over-approximation. Such over-approximation pessimism can be more and more significant with the increase of network size. To mitigate such pessimism, inspired by $\beta$-CROWN~\cite{beta_crown}, we propose an ad-hoc branch-and-bound approach for GROCET.

In this approach, we pick some ReLU nodes\footnote{Note that we here consider each individual node in the network, rather than each layer.} $x_i=relu(x_j)$ in the network, and branch them by the sign of $\Delta x_j$.\footnote{In $\beta$-CROWN~\cite{beta_crown}, it is branched by the sign of ReLU input $x_j$. For ReLU distance, rather than branch each ReLU input $x_j$ and $x_j + \Delta x_j$, we just branch the input distance $\Delta x_j$. We will explain our choice in the future.}

\begin{figure}[htb]
  \centering
  \includegraphics[width=\linewidth]{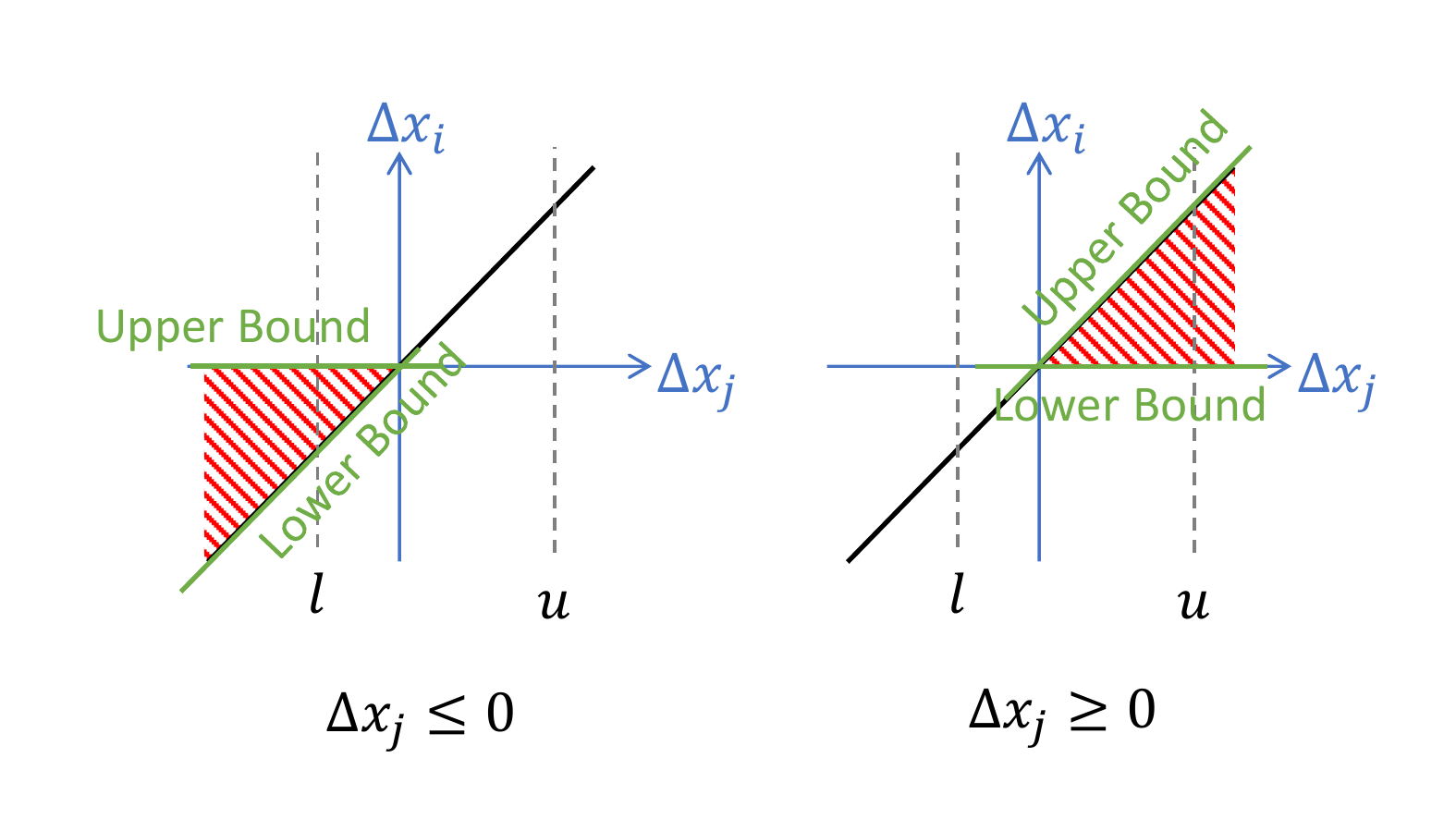}
  \caption{The linear lower and upper bounds of ReLU distance during the branch-and-bound of its input distance $\Delta x_j$.}
  \label{fig:relu_distance_bnb}
\end{figure}

Specifically, when branching a ReLU node $x_i=relu(x_j)$, we conduct the original symbolic propagation twice, under the cases of $\Delta x_j \leq 0$ and $\Delta x_j\geq 0$, respectively. Notice that by adding the constraint of $\Delta x_j \leq 0$ ($\Delta x_j\geq 0$), the relation between $\Delta x_j$ and $\Delta x_i$ will all fall into the lower-left red triangle (upper-right red triangle) area, as shown in Fig.~\ref{fig:relu_distance_bnb}. In these two symbolic propagation processes, the linear bounds of the ReLU layer are both tighter than the original bounds as shown in the third diagram of Fig.~\ref{fig:relu_distance}. As we split $\Delta x_j$ into two conditions, after the two symbolic propagation processes we can drive two interval bounds as in Eq.~\eqref{equ:final_interval_bound}. The final output variation bound will be the worst case among the two interval bounds.

To ensure the branching condition $\Delta x_j \leq 0$ (or $\Delta x_j\geq 0$) is satisfied during symbolic propagation, we leverage the same method as in $\beta$-CROWN~\cite{beta_crown}, which is the Lagrange multiplier. Take the lower output variation bound $A\Delta x_{in} + b$ as an example. We introduce a Lagrange multiplier $\beta_j\geq 0$ for each branched ReLU input distance $\Delta x_j$. For the case $\Delta x_j \leq 0$ and we have:
\begin{equation}
\begin{aligned}
    \Delta x_{out}|_{\Delta x_j \leq 0} &\geq \min_{\Delta x_{in}} \max_{\beta_j} A\Delta x_{in} + b + \beta_j S_j \Delta x_j\\
    &\geq \max_{\beta_j} \min_{\Delta x_{in}} A\Delta x_{in} + b + \beta_j S_j \Delta x_j
\end{aligned}
\end{equation}
where $S_j=1$.\footnote{for the condition $\Delta x_j \geq 0$, the Lagrange multiplier requires $S_j=-1$.} As The $\Delta x_j$ on the right-hand side can also be back-propagated to the input layer using the same symbolic propagation method, the objective of the inner minimization problem is still in the form of $A\Delta x_{in} + b$, i.e., linear to the input perturbation $\Delta x_{in}$. Therefore, for a given $\beta_j$ the inner minimization problem has a close form expression just like Eq.~\eqref{equ:final_interval_bound}. Similar to $\beta$-CROWN~\cite{beta_crown}, we solve the outer maximization problem by projected gradient descent (PGD)~\cite{madry2018towards} w.r.t. the Lagrange Multiplier $\beta_j$.

\section{Training Globally Robust Neural Networks}

The symbolic propagation of the linear bounds of neural network output variation, as presented in Section~\ref{sec:symb_prop}, not only provides the opportunity to leverage GPU to speedup the global robustness certification, but also makes it possible to evaluate the gradients of the global robustness (i.e., output variation bound) w.r.t. the network parameters (i.e., weights in each layer). Such gradients provide the opportunity to tune the network weights towards a more globally-robust neural network.

In this work, we propose a training framework that treats the neural network global robustness as a regularization term in the loss function to train globally-robust neural networks.

\subsection{Global robustness regularization}

Consider the commonly-used stochastic gradient descent (SGD) with mini-batch. Denote a batch of the training data as $(\textbf{x}, \textbf{y})$. The globally robust training loss is defined as:
\begin{equation}
    L(\mathcal{F}, \textbf{x}, \textbf{y}) = L_0(\mathcal{F}(\textbf{x}), \textbf{y}) + \lambda R_{GR}(\mathcal{F}),
\end{equation}
where $L_0$ is the regular loss term and $R_{GR}$ is the global robustness regularization. Specifically, we define $R_{GR}$ as:
\begin{equation}
    R_{GR} = (\|C\|_1 \delta + d) - (-\|A\|_1 \delta + b),
\end{equation}
which is the range of the neural network output variation bound interval in Eq.~\eqref{equ:final_interval_bound}. Note that to ensure the differentiability of the global robustness, the branch-and-bound technique in GROCET is not used during the training process.

As the global robustness is certificated for all possible input $\forall x_{in}$, the global robustness regularization term $R_{GR}$ is independent to the training sample $\textbf{x}$, which is the reason why it is a regularization term rather than a loss term. Therefore, the complexity of evaluating global robustness (as well as the evaluation of gradient during training), is irrelevant to the batch size of the training data.

\subsection{Gradient descent}
During gradient descent, the gradient of the loss function w.r.t. the network weights are needed. According to the neural network model considered in this work, the network weights $(W,b)$ will only occur in linear layers (as ReLU and maxpooling are separated layers). Moreover, as GROCET propagates the distances between the hidden neurons and their perturbed ones, the bias $b$ in each linear layer is not propagated. The gradient of $R_{GR}$ w.r.t. linear layer weight $W$, $\frac{\partial R_{GR}}{\partial W}$, can be derived by the chain rule according to the propagation of linear bounds as in Eq.~\eqref{equ:matrixOps}. We omit the detail here as it can be easily implemented in modern machine learning toolkits such as TensorFlow or PyTorch.

\section{Preliminary Experimental Results}

We evaluate GROCET for both global robustness certification and training. The algorithm is implemented in Python and TensorFlow is used to enable GPU acceleration. All experiments are conducted on a platform with Nvidia Titan RTX GPU. 

% Please add the following required packages to your document preamble:
% \usepackage{multirow}
\begin{table*}[t]
\centering
%\resizebox{.95\columnwidth}{!}{
\caption{Neural network setting, algorithm configuration and experimental results. The DNNs are trained for two datasets. Global robustness is evaluated under the input perturbation bound $\delta=2/255$ for MNIST DNNs and $\delta=0.001$ for CIFAR-10 DNNs.
For each DNN, the output variation bounds for 2 out of 10 outputs are presented.}
\begin{tabular}{|c|c|c|c|c|cccc|cc|c|}
\hline
\multirow{2}{*}{Dataset}  & \multirow{2}{*}{ID} & \multirow{2}{*}{Layers}                                                & \multirow{2}{*}{Neurons} & \multirow{2}{*}{$\underline{\varepsilon}$} & \multicolumn{4}{c|}{MILP-ITNE}                                                                                                                             & \multicolumn{2}{c|}{GROCET}                                                & \multirow{2}{*}{Improv.} \\ \cline{6-11}
                          &                     &                                                                        &                          &                                            & \multicolumn{1}{c|}{}                   & \multicolumn{1}{c|}{$r$}                 & \multicolumn{1}{c|}{$t$}                   & $\overline{\varepsilon}$ & \multicolumn{1}{c|}{$t$}                            & $\overline{\varepsilon}$ &                          \\ \hline
\multirow{6}{*}{MNIST}    & \multirow{2}{*}{1}  & \multirow{2}{*}{\begin{tabular}[c]{@{}c@{}}Conv:1\\ FC:2\end{tabular}} & \multirow{2}{*}{1416}    & 0.347                                      & \multicolumn{1}{c|}{\multirow{2}{*}{3}} & \multicolumn{1}{c|}{\multirow{2}{*}{30}} & \multicolumn{1}{c|}{\multirow{2}{*}{4.8h}} & 0.578                    & \multicolumn{1}{c|}{\multirow{2}{*}{\textbf{0.3h}}} & \textbf{0.446}           & 22.8\%                   \\ \cline{5-5} \cline{9-9} \cline{11-12} 
                          &                     &                                                                        &                          & 0.300                                      & \multicolumn{1}{c|}{}                   & \multicolumn{1}{c|}{}                    & \multicolumn{1}{c|}{}                      & 0.572                    & \multicolumn{1}{c|}{}                               & \textbf{0.398}           & 30.4\%                   \\ \cline{2-12} 
                          & \multirow{2}{*}{2}  & \multirow{2}{*}{\begin{tabular}[c]{@{}c@{}}Conv:2\\ FC:2\end{tabular}} & \multirow{2}{*}{3872}    & 0.453                                      & \multicolumn{1}{c|}{\multirow{2}{*}{3}} & \multicolumn{1}{c|}{\multirow{2}{*}{30}} & \multicolumn{1}{c|}{\multirow{2}{*}{3.3h}} & 0.874                    & \multicolumn{1}{c|}{\multirow{2}{*}{\textbf{1.3h}}} & \textbf{0.718}           & 17.8\%                   \\ \cline{5-5} \cline{9-9} \cline{11-12} 
                          &                     &                                                                        &                          & 0.420                                      & \multicolumn{1}{c|}{}                   & \multicolumn{1}{c|}{}                    & \multicolumn{1}{c|}{}                      & 0.723                    & \multicolumn{1}{c|}{}                               & \textbf{0.666}           & 7.9\%                    \\ \cline{2-12} 
                          & \multirow{2}{*}{3}  & \multirow{2}{*}{\begin{tabular}[c]{@{}c@{}}Conv:3\\ FC:2\end{tabular}} & \multirow{2}{*}{5824}    & 0.519                                      & \multicolumn{1}{c|}{\multirow{2}{*}{3}} & \multicolumn{1}{c|}{\multirow{2}{*}{30}} & \multicolumn{1}{c|}{\multirow{2}{*}{3.5h}} & 1.521                    & \multicolumn{1}{c|}{\multirow{2}{*}{\textbf{2.8h}}} & \textbf{1.412}           & 7.2\%                    \\ \cline{5-5} \cline{9-9} \cline{11-12} 
                          &                     &                                                                        &                          & 0.407                                      & \multicolumn{1}{c|}{}                   & \multicolumn{1}{c|}{}                    & \multicolumn{1}{c|}{}                      & 1.175                    & \multicolumn{1}{c|}{}                               & \textbf{1.106}           & 5.9\%                    \\ \hline
\multirow{4}{*}{CIFAR-10} & \multirow{2}{*}{4}  & \multirow{2}{*}{\begin{tabular}[c]{@{}c@{}}Conv:4\\ FC:2\end{tabular}} & \multirow{2}{*}{6352}    & 0.817                                      & \multicolumn{1}{c|}{\multirow{2}{*}{3}} & \multicolumn{1}{c|}{\multirow{2}{*}{0}}  & \multicolumn{1}{c|}{\multirow{2}{*}{5.7h}} & 291                      & \multicolumn{1}{c|}{\multirow{2}{*}{\textbf{1h}}}   & \textbf{261}             & 10.3\%                   \\ \cline{5-5} \cline{9-9} \cline{11-12} 
                          &                     &                                                                        &                          & 0.761                                      & \multicolumn{1}{c|}{}                   & \multicolumn{1}{c|}{}                    & \multicolumn{1}{c|}{}                      & 292                      & \multicolumn{1}{c|}{}                               & \textbf{261}             & 10.6\%                   \\ \cline{2-12} 
                          & \multirow{2}{*}{5}  & \multirow{2}{*}{\begin{tabular}[c]{@{}c@{}}Conv:4\\ FC:3\end{tabular}} & \multirow{2}{*}{10816}   & 1.136                                      & \multicolumn{1}{c|}{\multirow{2}{*}{3}} & \multicolumn{1}{c|}{\multirow{2}{*}{0}}  & \multicolumn{1}{c|}{\multirow{2}{*}{31h}}  & 1287                     & \multicolumn{1}{c|}{\multirow{2}{*}{\textbf{28h}}}  & \textbf{848}             & 34.1\%                   \\ \cline{5-5} \cline{9-9} \cline{11-12} 
                          &                     &                                                                        &                          & 1.074                                      & \multicolumn{1}{c|}{}                   & \multicolumn{1}{c|}{}                    & \multicolumn{1}{c|}{}                      & 1241                     & \multicolumn{1}{c|}{}                               & \textbf{867}             & 30.1\%                   \\ \hline
\end{tabular}
\label{tab:results}

{\raggedright ID - DNN ID; Layers - type and number of layers (include output layer); Neurons - total number of ReLU hidden neurons; $\underline{\varepsilon}$ - under-approximated output variation bound derived by PGD on the entire dataset; $w$ - network decomposition window size in MILP-ITNE; $r$ - number of refined neurons (each layer) in MILP-ITNE; $t$ - global robustness certification time; $\overline{\varepsilon}$ - over-approximated output variation bound derived by ITNE approaches; Improv. - improvement of GROCET on the tightness of global robustness over-approximation. \par}
\end{table*}

\begin{figure*}[htb]
  \centering
  \includegraphics[width=0.8\textwidth]{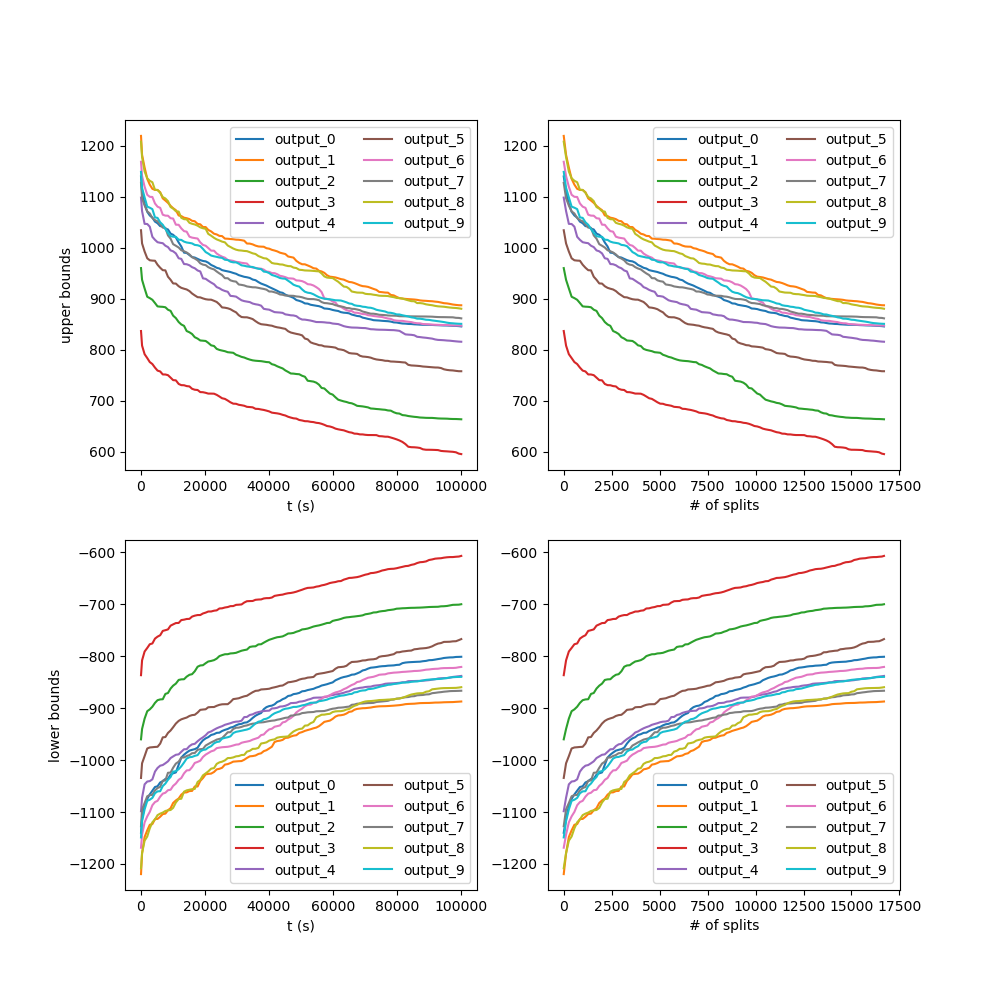}
  \caption{The ``current best'' over-approximated variation bounds of each output of DNN-5 during the certification process. The x-axis of left two figures is the certification time, while the x-axis of right two figures is the number of splits that GROCET has made during branch-and-bound.}
  \label{fig:cifar_progress}
\end{figure*}

\subsection{GROCET for global robustness certification}
Firstly, we compare the performance of GROCET with the previous optimization-based MILP-ITNE~\cite{zhilu_2022_date}. We conduct global robustness certification for several neural networks with different sizes. The networks are trained for MNIST and CIFAR-10 datasets. The neural networks range from 1400 hidden neurons to 10,000+ hidden neurons. The detail neural network configuration can be found in Table~\ref{tab:results}. Note that the MNIST networks (DNN-1 to DNN-2) are the same network used in~\cite{zhilu_2022_date}. 

The running time, as well as the global robustness certification results, are also listed in Table~\ref{tab:results}. Moreover, as a reference, we also include an under-approximation of the global robustness derived by the adversarial example searching method PGD~\cite{madry2018towards}.

From Table~\ref{tab:results}, we can observe that compared with MILP-ITNE, under similar running time, GROCET can achieve a much tighter over-approximation. And with similar level of over-approximation, GROCET can derive the output variation bounds at a much faster speed. All are achieved by the ability of GROCET to leverage the computation power of GPU. 

Moreover, as an approach based on branch-and-bound, GROCET can stop its optimization at any time and provide an instant result. The result will always be the ``current best'', which is the tightest over-approximation that it can find at that moment. In Fig.~\ref{fig:cifar_progress}, it shows the progress of GROCET that tightens the lower and upper bounds of the output variation of DNN-5. While MILP-CROWN can only have the certification result after it finishes solving all optimization problems (because of its network decomposition~\cite{zhilu_2022_date}), GROCET can use a timeout constraint to stop the process at any time the user wants.

\begin{table}[t]
    \centering
    \begin{tabular}{|c|c|c|c|c|}
    \hline
    Model& DNN-2 & DNN-2-GR & DNN-5 & DNN-5-GR\\
    \hline
    Train Acc & 95.35 & \textbf{96.90} & 70.99 & \textbf{87.82} \\ \hline
    Test Acc & 95.80 & \textbf{96.88} & \textbf{75.64} & 69.41 \\ \hline
    $\overline{\varepsilon}_{out1}$ & \textbf{0.718} & 0.746 & 848 & \textbf{55.8} \\ \hline
    $\overline{\varepsilon}_{out2}$ & 0.666 & \textbf{0.590} & 867 & \textbf{70.0} \\
    \hline
    \end{tabular}
    \caption{Comparison between regular training and globally-robust training on the MNIST model DNN-2 and CIFAR model DNN-5. DNN-2 and DNN-5 are from regular training while DNN-2-GR and DNN-5-GR are from globally robust training.}\label{tab:train_res}
\end{table}

\subsection{Training globally robust neural networks}

We leverage the differentiable property of GROCET-derived global robustness and train new models with the same model structure of DNN-2 (MNIST model) and DNN-5 (CIFAR model) by using global robustness as a regularization term in the loss function. Note that for the original DNN-2 and DNN-5 models, to avoid over-fitting, they are trained with L-2 regularization term in the loss function. The training and global robustness certification results are shown in Table~\ref{tab:train_res}. 

From~\ref{tab:train_res}, we can observe that for MNIST model DNN-2, with global robustness training, we can achieve higher accuracy while having a similar level of global robustness.  For CIFAR model DNN-5, we can observe that DNN-5-GR has a much higher training accuracy but lower testing accuracy. The possible reason is that when using global robustness as regularization rather than L-2 norm, the model is overfitted. However, on the other hand, on this larger DNN model, with global robustness training, the network global robustness can be more than 10x better than regular training with regularization.

\section{Conclusion}
In this paper, we propose an efficient GPU-support, differentiable global robustness certification tool GROCET. With GPU support, GROCET is much more efficient and more flexible than the state-of-the-art global robustness certification tool MILP-ITNE~\cite{zhilu_2022_date}. Moreover, leveraging the differentiable property, GROCET can facilitate the training of globally robust neural networks. Experimental results demonstrate that GROCET can help the training of neural networks with much higher global robustness, especially for large network models.

% if have a single appendix:
%\appendix[Proof of the Zonklar Equations]
% or
%\appendix  % for no appendix heading
% do not use \section anymore after \appendix, only \section*
% is possibly needed

% use appendices with more than one appendix
% then use \section to start each appendix
% you must declare a \section before using any
% \subsection or using \label (\appendices by itself
% starts a section numbered zero.)
%

% \appendices
% \section{Proof of the First Zonklar Equation}
% Appendix one text goes here.

% % you can choose not to have a title for an appendix
% % if you want by leaving the argument blank
% \section{}
% Appendix two text goes here.

% % use section* for acknowledgment
% \section*{Acknowledgment}

% The authors would like to thank...

% Can use something like this to put references on a page
% by themselves when using endfloat and the captionsoff option.
\ifCLASSOPTIONcaptionsoff
  \newpage
\fi

% trigger a \newpage just before the given reference
% number - used to balance the columns on the last page
% adjust value as needed - may need to be readjusted if
% the document is modified later
%\IEEEtriggeratref{8}
% The "triggered" command can be changed if desired:
%\IEEEtriggercmd{\enlargethispage{-5in}}

% references section
\bibliographystyle{IEEEtran}
% argument is your BibTeX string definitions and bibliography database(s)
\bibliography{IEEEabrv,zhilu.bib}

\end{document}